\documentclass{article}


     \usepackage[preprint]{neurips_2019}



\usepackage[utf8]{inputenc} 
\usepackage[T1]{fontenc}    
\usepackage{hyperref}       
\usepackage{url}            
\usepackage{booktabs}       
\usepackage{amsfonts}       
\usepackage{nicefrac}       
\usepackage{microtype}      

\usepackage{latexsym,csquotes}
\usepackage{amsmath,amssymb}
\usepackage{amsthm,enumitem}
\usepackage{IEEEtrantools,todonotes}
\usepackage{amsfonts}
\usepackage{algorithm}
\usepackage[noend]{algpseudocode}

\newtheorem{theorem}{Theorem}

\DeclareMathOperator*{\argmin}{\arg\!\min}

\title{On approximating dropout noise injection}
\author{%
  Natalie Schluter
    \\
  Department of Computer Science\\
 IT University of Copenhagen\\
 Copenhagen, Denmark \\
  \texttt{natschluter@itu.dk} \\
}

\begin{document}

\maketitle

\begin{abstract}
This paper examines the assumptions of the derived equivalence between dropout noise injection and L2 regularisation for logistic regression with negative log loss. We show that the approximation method is based on a divergent Taylor expansion, making, subsequent work using this approximation to compare the dropout trained logistic regression model with standard regularisers unfortunately ill-founded to date.   Moreover, the approximation approach is shown to be invalid using any robust constraints.  We show how this finding extends to general neural network topologies that use a cross-entropy prediction layer.
\end{abstract}

\section{Introduction}
We examine the assumptions of the derived equivalence between dropout noise injection (expressed as an approximated dropout penalty term) and $L_2$ regularisation for logistic regression with negative log loss \citep{wager_etal2013}.  An important amount of empirical work has been devoted to trying to empirically extend this finding and simultaneously reconcile it with some seemingly contradictory behaviours of the noise regulariser term in question \citep{srivastava_etal2014,helmbold2015,helmbold2016}. For instance, \citet{helmbold2016} remark that dropout penalty training leads to negative parameters, even when the output is a positive multiple of the inputs, signaling co-adaptation of parameters. They also explain that for neural networks, the dropout penalty grows exponentially in the depth of the network in cases whereas the $L_2$ regulariser grows linearly.  These are both behaviours that are in strict contrast with the behaviour of the $L_2$  regulariser.

A typical tool in Machine Learning analysis is to apply the second order approximation technique.  \citet{bishop95} successfully applies it to dropout noise injection for Generalised Linear Models (GLMs) with a mean squared error loss term.  Applying this tool, a similar analysis has been carried out in the research community for logistic regression models with negative log loss, trained with dropout noise injection.  From this, it has been concluded that dropout training is ``first-order'' equivalent to training with $L_2$ regularisation, after a special scaling of the features \citep{wager_etal2013}.  In this paper, we show that the approximation method, in the particular case of logistic regression, is based on a divergent Taylor expansion.  Hence, subsequent work using this approximation to compare the dropout trained logistic regression model with standard regularisers, remains unfortunately ill-founded to date.

\paragraph{Our main result.}  Let $\mathbf{\tilde{x}}$ be the feature vector subjected to dropout,\footnote{We explain in the next section why we can express dropout this way.} $\mathbf{x}$ its non-perturbed counterpart, and $\pmb{\beta}$ the parameter vector of a logistic regression model, under training by minimising negative log loss.  Our central result is the following:
\begin{displayquote}
The Taylor series expansion of the dropout regulariser converges only for values $|\mathbf{\tilde{x}}\pmb{\beta}-\mathbf{x}\pmb{\beta}|<2\pi$.  However, dropout ensures that the distance 
\begin{equation}
|\mathbf{\tilde{x}}\pmb{\beta}-\mathbf{x}\pmb{\beta}|
\label{eq:bound}\end{equation}
remains unbounded.  Therefore,
\begin{enumerate}
\item the Taylor series expansion of the dropout regulariser does not converge in general, and
\item because of this (in addition to just the fact that the distance in (\ref{eq:bound}) is unbounded), use of the second order cut-off of the Taylor series expansion of the dropout regulariser to study dropout is ill-founded. 
\end{enumerate}
\end{displayquote}
We also expand the setting to the softmax prediction layer neural network architectures that use cross-entropy loss.  Further, we show how there cannot be any robust approach that induces meeting the assumptions of the approximation technique, thereby saving the original result. 


\paragraph{Organisation of this paper.}  We start this paper by establishing our notation: introducing the models we address, and then defining regularisation and dropout (Section \ref{sec:prelim}).  Following this,  set the context (Section \ref{sect:context}) and re-derive the quadratic approximation of previous work on dropout.  We extend the derivation to order-$k$ approximations in Section \ref{sect:higherorderTaylor}.  There, we prove an explicit expression for the $k$-th order approximation, which will be used in our main result.  Section \ref{sect:radius} provides the proof our main result.

\section{Preliminaries}\label{sec:prelim}
Let  $y\in\{0,1\}$ be a response variable, given a feature vector $\mathbf{x}\in \mathbb{R}^d$ and $A$ is the \emph{log partition function}, 
$A(z)=(1+\exp(z))$. The \emph{logistic regression} model for the distribution of $y$ given $\mathbf{x}$ is defined as 
\[p_{\pmb{\beta}}(y=1|\mathbf{x}):=p_{\pmb{\beta}}(y|\mathbf{x})=\exp (\mathbf{x\cdot\pmb{\beta}}-A(\mathbf{x\cdot\pmb{\beta}}))=\frac{1}{1+e^{-\mathbf{x}\pmb{\beta}}},\] 
where $\pmb{\beta}\in \mathbb{R}^d$ is the natural parameter (weight vector).  Correspondingly, the loss function, negative log likelihood, is given by 
\begin{IEEEeqnarray}{rl}
l_{\mathbf{x},y}(\pmb{\beta}):=& l^1_{\mathbf{x},y}(\pmb{\beta})+l^0_{\mathbf{x},y}(\pmb{\beta})\\
=&-\log (y\cdot p_{\pmb{\beta}}(y|\mathbf{x}))-(1-y)\log(1-p_{\pmb{\beta}}(y|\mathbf{x})),
\end{IEEEeqnarray}
where we have a \emph{positive example loss term} $l^1_{\mathbf{x},y}(\pmb{\beta})=-\log (y\cdot p_{\pmb{\beta}}(y|\mathbf{x}))$ and a \emph{negative example loss term} $l^0_{\mathbf{x},y}(\pmb{\beta})=-(1-y)\log(1-p_{\pmb{\beta}}(y|\mathbf{x}))$.
So finding optimal $\pmb{\beta}^*$ with respect to a dataset with $n$ examples, amounts to minimising expected loss over the dataset $\{(\mathbf{x_i},y_i)\}_{i=1}^n$: 
\begin{equation}\pmb{\beta}^*:=\argmin_{\pmb{\beta} \in \mathbb{R}^d}\sum_{i=0}^n\left(\mathbb{E}[l^1_{\mathbf{x_i},y_i}(\pmb{\beta})]+\mathbb{E}[l^0_{\mathbf{x_i},y_i}(\pmb{\beta})] \right).\label{eq:bstar}\end{equation}
In practise, one would again normalize this accumulated loss by the size of the dataset $n$ during learning.  

\paragraph{A description in terms of noised feature vectors.} For the logistic regression model, it is clear that we can consider dropout to be over features $\mathbf{x_i}$ rather than parameters $\pmb{\beta}$.  Extending to the context of neural networks requires us to generalise this description, which we will do next.  So we can speak of a \emph{dropout noised} example $\mathbf{x_i}\in \mathbb{R}^d$, which is defined as $\mathbf{\tilde{\mathbf{x}}_i}:=\mathbf{x_i}\odot\pmb{\xi_i}$ where $\odot$ is the Hadamard product (i.e., element-wise vector product) and we have each element $\xi_{ij}\in \{0,(1-\delta)^{-1}\}$, such that $\xi_{ij}$ is 0 with probability $\delta$ and otherwise $\xi_{ij}=(1-\delta)^{-1}$.  That is, components of $\mathbf{\tilde{\mathbf{x}}_i}$ are dropped to 0 following a Bernoulli$(\delta)$ distribution and are otherwise a scaled version of the respective components from $\mathbf{x_i}$.  Note that our computations will be over expected values for $\xi_{ij}$ and we do not assume the vectors $\pmb{\xi_i}$ to be constant.  The noised version of loss can then be written as (positive example term only):
\begin{equation} l^1_{\mathbf{\tilde{x}},y}(\pmb{\beta})
=-\log (p_{\pmb{\beta}}(y|\mathbf{\tilde{x}})).
\end{equation}
where we obviate explicitly writing $y$, since $y=1$ here.

For the remainder of this paper, we concentrate on the $l^1_{\mathbf{x_i},y_i}(\pmb{\beta})$ term of the loss, therefore taking an economy in notation and denoting it without the superscript.  It is sufficient to show the result of the paper for this term as do \citet{wager_etal2013}.  Indeed, minimising \[-(1-y)\log(1-p)\]
is the same as minimising \[(1-y)\log(p).\]

\paragraph{A note on extending this description to single layer feed-forward networks.}
For a simple single layer feed-forward network, we can again consider dropout to be over features $\mathbf{x_i}$.  
We introduce a dropout matrix of parameters, $\mathbf{Z_i}=\{\pmb{\xi_{i,j}}\}_{d\times q}$, 
in which each $\pmb{\xi_{i,j}}$ has dimension $d$ and $\xi_{i,j,k}\in \{0,(1-\delta)^{-1}\}$.  Then for each feature vector $\mathbf{x_i}$, we associate $q$ noising vectors $\mathbf{\tilde{\mathbf{x}}_i^j}:=\mathbf{x_i}\odot\pmb{\xi_{i,j}}$, $j\in[q]$.

For multi-class categorical cross-entropy, the response variable vector is $\mathbf{y_i}=(y_{i1},\ldots,y_{iq}), y_{ij}\in\{0,1\}, j\in [q]$, with each variable element associated to its corresponding noise vector $\pmb{\xi}_{i1},\ldots,\pmb{\xi}_{iq}$ and loss term $l_{\mathbf{\tilde{x}_i^1},y_{i1}}(\pmb{\beta}),\ldots,l_{\mathbf{\tilde{x}_i^{q}},y_{iq}}(\pmb{\beta})$ (only the positive example loss terms here), but normalised by the sum across all outputs (i.e., softmax).  Then
\begin{equation}
\pmb{\beta}^*:=\argmin_{\pmb{\beta} \in \mathbb{R}^d}\sum_{j=1}^q\sum_{i=0}^n\left(\mathbb{E}[l_{\mathbf{\tilde{x}_i^j},y_{ij}}(\pmb{\beta})] \right).\label{eq:bstar}
\end{equation}

It is straightforward to further take this description as the prediction layer of more complex network topologies.

\section{Noise regularisation for GLMs}\label{sect:context}
Recent years have seen a steady stream of work on both empirically and formally understanding the success of noise injection in avoiding overfitting.  On the formal side, \citet{webb1993} first showed that Gaussian noise injection with a least squares objective can be explicitly expressed by the original non-noisy objective with a separate regulariser.  \citet{bishop95} continued further showing that this is a Tikhonov regulariser approximately with sufficiently small noise levels.  He does so by using a second order approximation of the Taylor series expansion of the noise regulariser.  \citet{bishop95} also used this approximation to obtain an explicit expression for the correction of parameters after training, as an alternative to training with sufficiently small enough noise levels.

\citet{wager_etal2013} use the approximation technique from \citet{bishop95} to study dropout.  They express GLM log loss as log loss with a sort of drop-out regulariser.  That is, on expectation given the dropout noise, the loss can be expressed as
\begin{IEEEeqnarray}{rl}
\mathbb{E}_{\pmb{\xi}} [l_{\tilde{\mathbf{x_i}},y} (\pmb{\beta})]=&
-(y\mathbf{x_i}\cdot\pmb{\beta}+A(\mathbf{x_i}\cdot\pmb{\beta})-A(\mathbf{x_i}\cdot\pmb{\beta})-\mathbb{E}_{\pmb{\xi}}[A(\tilde{\mathbf{x_i}}\cdot\pmb{\beta})])
\nonumber\\=&\;l_{\mathbf{x_i},y}(\pmb{\beta})+R(\pmb{\beta})
\end{IEEEeqnarray}

\noindent where $R(\pmb{\beta})$ can be seen as a regulariser to be minimised with loss in lieu of training with dropout.   \cite{wager_etal2013} derived an approximation for $R(\pmb{\beta})$ that revealed an equivalence with the $L_2$ expression.  In order to do this, they resort to a polynomial approximation of $R(\pmb{\beta})$ of the expected log partition function $A$ by means of taking the first three terms (up to the second order) of the Taylor expansion of $\mathbb{E}_{\pmb{\xi}}[A(\tilde{\mathbf{x_i}}\cdot\pmb{\beta})]$ on low noise levels, following \cite{bishop95} for Gaussian noise. We will show here that the Taylor expansion in the case of dropout does not converge, making the approximation upon which this is based non-applicable.

With $\tilde{\mathbf{x_i}}\cdot\pmb{\beta}$ close to $\mathbf{x_i}\cdot\pmb{\beta}$, i.e., on low noise levels, and writing $R_2$ for the quadratic approximation, they obtain 
$R_2(\pmb{\beta})=\frac{1}{2} A''(\mathbf{x_i}\cdot\pmb{\beta})Var_{\pmb{\xi}}[\tilde{\mathbf{x_i}}\cdot\pmb{\beta}]$.

We are studying dropout, at probability $\delta$ and otherwise scaling by $\frac{1}{1-\delta}$.  To model this situation, we define the constants (with respect to dropout probability and $i$) if no parameters are dropped

\begin{IEEEeqnarray}{rl}
B_i & :=(\tilde{\mathbf{x_i}}\cdot\pmb{\beta}-\mathbf{x_i}\cdot\pmb{\beta})=\left(\frac{\mathbf{x_i}\cdot\pmb{\beta}}{1-\delta}-\mathbf{x_i}\cdot\pmb{\beta}\right)\nonumber\\
&=(\mathbf{x_i}\cdot\pmb{\beta})\left(\frac{\delta}{1-\delta}\right).
\end{IEEEeqnarray}

To take into consideration a weight dropping to 0, we introduce the boolean random variable $Y$ which follows a Bernoulli distribution with parameter $\mathbb{E}_{\pmb{\xi}}[Y]=(1-\delta)$.  We let $Z_i:=B_i\cdot Y$.  Since $B_i^n=(\mathbf{x_i}\cdot\pmb{\beta})^n\left(\frac{\delta}{1-\delta}\right)^n$
in general, for $\mathbb{E}_{\pmb{\xi}}[Z^n]$, we have
$\mathbb{E}_{\pmb{\xi}}[Z^n]=B_i^n \mathbb{E}_{\pmb{\xi}}[Y^n]=B_i^n \mathbb{E}_{\pmb{\xi}}[Y] \nonumber\\
=(\mathbf{x_i}\cdot\pmb{\beta})^n\left(\frac{\delta^n}{(1-\delta)^{n-1}}\right)$.  So, in particular for $n=2$, we have \begin{equation}
\mathbb{E}_{\pmb{\xi}}[(\tilde{\mathbf{x_i}}\cdot\pmb{\beta}-\mathbf{x_i}\cdot\pmb{\beta})^2]\nonumber
=(\mathbf{x_i}\cdot\pmb{\beta})^2\left(\frac{\delta^2}{1-\delta}\right)\label{eq:corr_wager}
\end{equation}
Where in the case of logistic regression, $A''(\mathbf{x_i}\cdot\pmb{\beta})=p_i(1-p_i)$ and $p_i=(1+e^{-\mathbf{x_i}\cdot\pmb{\beta}})^{-1}$.

\paragraph{Linear regression.}  Conveniently, for linear regression, we have
\[A(\mathbf{x_i}\cdot\pmb{\beta})=\frac{1}{2}(\mathbf{x_i}\pmb{\beta})^2.\]
Since the third order derivative disappears in this case, there is perfect equality between the $L_2$ regulariser and the dropout regulariser, which seems to give evidence to the similarity between dropout and $L_2$ regularisation in general.

\paragraph{Generalising.} Before addressing the appropriateness of this approximation in determining the relationship with other forms of standard regularisation, we generalise the approximation to order $k$.

\section{k-th order polynomial approximations}\label{sect:higherorderTaylor}
An order $k$ polynomial approximation of $\mathbb{E}_{\pmb{\xi}} [A(\tilde{\mathbf{x_i}}\cdot\pmb{\beta} )]$ is given by taking the first $k$ terms of the Taylor expansion, with $\tilde{\mathbf{x_i}}\beta$ close to $\mathbf{x_i}\beta$:

\begin{IEEEeqnarray}{rl}
\mathbb{E}_{\pmb{\xi}} [A(\mathbf{x_i}\cdot\pmb{\beta} )]&\approx \sum_{m=0}^k \frac{E_{\pmb{\xi}}[A^{(m)}(\tilde{\mathbf{x_i}}\cdot\pmb{\beta})\cdot (\tilde{\mathbf{x_i}}\cdot\pmb{\beta}-x_i\cdot\pmb{\beta} )^m] }{m!}\nonumber\\
 =&\sum_{m=0}^k \frac{A^{(m)}(\mathbf{x_i}\cdot\pmb{\beta})\cdot\mathbb{E}_{\pmb{\xi}} [(\tilde{\mathbf{x_i}}\cdot\pmb{\beta}-x_i\cdot\pmb{\beta} )^m] }{m!}
\nonumber\\
 =&\sum_{m=0}^k \frac{A^{(m)}(\mathbf{x_i}\cdot\pmb{\beta})\cdot (\mathbf{x_i}\cdot\pmb{\beta} )^m\delta^m }{m!(1-\delta)^{m-1}}\nonumber
\end{IEEEeqnarray}

Let us denote $A_k:=A^{(k)}(\mathbf{x}\cdot\pmb{\beta})$. Let $p:=(1+e^{-\mathbf{x}\cdot\pmb{\beta}})^{-1}$.  So, $A_0=p$ and $A_1=p(1-p)=:p'$, where $p'$ is the derivative of $p$.  We now provide an explicit form for $A_k$, in the case of logistic regression.  

The \emph{Triangle of numbers}, $T(n,k)$\footnote{Sequence number A019538 in the OEIS \url{oeis.org}.}, is defined as
\[T(n,k)=k!\dot S_2(n,k)\]
for $n\geq 1$ and $1\leq k\leq n$, where $S_2(n,k)$ are \emph{Stirling numbers of the second kind}.

We now show the following theorem.
\begin{theorem}
For $k\geq 1$,
\[A_k= p'\sum_{j=1}^k (-1)^{(j-1)}\cdot p^{(j-1)}\cdot T(k,j)\]
\end{theorem}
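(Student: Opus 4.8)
The plan is to prove the identity by induction on $k$, exploiting the single structural fact that the quantities $A_k$ are consecutive derivatives of the log partition function, so that $A_{k+1}=\frac{d}{dz}A_k$ with $z=\mathbf{x}\cdot\pmb{\beta}$. The whole computation can be carried out in the variable $p=(1+e^{-z})^{-1}$ alone, because $\frac{dp}{dz}=p(1-p)=p'$; hence every $z$-derivative of a polynomial in $p$ is again a polynomial in $p$, and in particular $\frac{d}{dz}p^{m}=m\,p^{m-1}p'$ and $\frac{d}{dz}p'=p'(1-2p)$. The base case $k=1$ is immediate from the text, since $T(1,1)=1$ gives $A_1=p'$, matching $A_1=p(1-p)$.

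For the inductive step I would assume $A_k=p'\sum_{j=1}^{k}(-1)^{j-1}p^{j-1}T(k,j)$ and differentiate once in $z$. Applying the product rule to each summand and factoring out $p'$ collapses the two resulting contributions to a single clean polynomial: a short calculation gives $\frac{d}{dz}\bigl(p'p^{j-1}\bigr)=p'\bigl(j\,p^{j-1}-(j+1)\,p^{j}\bigr)$, using $p''=p'(1-2p)$ and $(p')^2=p'\,p(1-p)$. Substituting this back yields $A_{k+1}=p'\sum_{j=1}^{k}(-1)^{j-1}T(k,j)\bigl(j\,p^{j-1}-(j+1)p^{j}\bigr)$, and it remains only to re-index and collect the coefficient of each power $p^{m-1}$.

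The combinatorial heart of the argument is the recurrence for the triangle $T$. Reading off the coefficient of $p^{m-1}$ above picks up the term $j=m$ from the first sum and $j=m-1$ from the second, giving (after tracking signs) $(-1)^{m-1}\,m\bigl(T(k,m)+T(k,m-1)\bigr)$. I would then invoke the second-kind Stirling recurrence $S_2(k+1,m)=m\,S_2(k,m)+S_2(k,m-1)$, which, after multiplying through by $m!$ and using $m!=m\,(m-1)!$, becomes $T(k+1,m)=m\bigl(T(k,m)+T(k,m-1)\bigr)$; this is exactly the factor produced, so the coefficient equals $(-1)^{m-1}T(k+1,m)$ and the claimed form for $A_{k+1}$ follows. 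The boundary indices are handled by the standard conventions $T(k,0)=0$ and $T(k,k+1)=0$, which correctly account for the $m=1$ and $m=k+1$ endpoints.

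The step I expect to be the main obstacle is the bookkeeping in the coefficient extraction: the two sums contribute with shifted exponents and with signs $(-1)^{j-1}$ that flip when $j$ is replaced by $m-1$, so the delicate point is verifying that the alternating signs recombine into a single factor $(-1)^{m-1}$ and that the numerical factor is precisely $m\bigl(T(k,m)+T(k,m-1)\bigr)$ rather than some off-by-one variant. Once the sign and index reconciliation is pinned down, matching it to the Stirling recurrence is immediate, and no analytic input beyond the logistic identity $p'=p(1-p)$ is needed.
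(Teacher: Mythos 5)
Your proof is correct and follows essentially the same route as the paper's: induction on $k$ via $A_{k+1}=\frac{d}{dz}A_k$, the logistic identity $p'=p(1-p)$ to reduce everything to polynomials in $p$, and the Stirling recurrence $S_2(k+1,j)=jS_2(k,j)+S_2(k,j-1)$ (equivalently $T(k+1,j)=j\bigl(T(k,j)+T(k,j-1)\bigr)$) to recombine the two shifted contributions. The only difference is bookkeeping — you extract the coefficient of each power $p^{m-1}$ directly, while the paper tracks which of the two derivative terms of each $A_{k,j}$ feeds into each $A_{k+1,j}$ — and your sign and index reconciliation checks out.
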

\begin{proof}
Let $A_{k,j}$ be the $j$th of the $k$ terms of $A_k$:
\[A_{k,j}:=p'(-1)^{(j-1)}\cdot p^{(j-1)}\cdot T(k,j).\]
Our proof of the theorem is by induction on $k>1$.
We have as the base case, $A_2$, because 
\[A_1=A_{1,1}=p'=p-p^2,\] 
so \[A_2=A_{2,1}+A_{2,2}=p'-2pp'.\]
Let us assume the result for $A_k$ and show that it then holds for $A_{k+1}$.  To do this, we show how the terms of $A_{k+1}$ are generated when taking the derivative of $A_k$.

The $j$th term of $A_k$ is:
\begin{IEEEeqnarray}{rl}
A_{k,j}=&p'(-1)^{j-1}p^{j-1}T(k,j)\nonumber\\
=&p^j(-1)^{j-1}T(k,j)+p^{j+1}(-1)^j T(k,j)\nonumber
\end{IEEEeqnarray}
Taking the derivative, we have
\begin{IEEEeqnarray}{rl}
(A_{k,j})'
=&jp^{(j-1)}p'(-1)^{j-1}T(k,j)+(j+1)p^jp'(-1)^j T(k,j) \nonumber\\
=&:\texttt{term}_1((A_{k,j})')+\texttt{term}_2((A_{k,j})')\nonumber\\\label{eq:A_term}
\end{IEEEeqnarray}
which is composed of two terms: the first, \texttt{term}$_1$, with $p^{j-1}p'$ and the second, \texttt{term}$_2$, with $p^jp'$.  We now show how the individual terms of $A_{k+1}$ are composed of the terms from Equation \ref{eq:A_term} for $j\in[k]$.  In particular, the first term contributes to $A_{k+1,j-1}$ and the second term contributes to $A_{k+1,j}$.

First, if $j=1$, we have 
\[A_{k+1,1}=p'\]
and thus only a contribution from the first term of $(A_{k,1})'$, which is also $p'$.  

If $j=k+1$, we have
\[A_{k+1,k+1}=p'(-1)^kp^kT(k+1,k+1)\]
and thus only a contribution from the second term of $(p'A_{k,k})'$, since
\begin{IEEEeqnarray}{rl}
(p'A_{k,k})'&=kp^{(k-1)}p'(-1)^{k-1}T(k,k)
+(k+1)p^kp'(-1)^k T(k,k) \nonumber\\
=&kp^{(k-1)}p'(-1)^{k-1}T(k,k)
+p^kp'(-1)^k T(k+1,k+1) \nonumber \\
=&kp^{(k-1)}p'(-1)^{k-1}T(k,k)+A_{k+1,k+1}.\nonumber
\end{IEEEeqnarray}

By definition of the Triangle of numbers, and the recurrence relation $S_2(k+1,j)=j!S_2(k,j)+S_2(k,j-1)$ of the Stirling numbers of the second kind, we have
$T(k+1,j)=j!S_2(k+1,j)=j!\left(jS_2(k,j)+S_2(k,j-1)\right)$.

If $1<j<k+1$,
\begin{IEEEeqnarray}{rl}
A_{k+1,j}
=& p'(-1)^{(j-1)}\cdot p^{(j-1)}\cdot j!\left(jS_2(k,j)+S_2(k,j-1)\right) \nonumber\\
=& p'(-1)^{(j-1)}\cdot p^{(j-1)}\cdot  j! \cdot j \cdot S_2(k,j)
+p'(-1)^{(j-1)}\cdot p^{(j-1)}\cdot j! \cdot S_2(k,j-1)\nonumber\\
=& p'(-1)^{(j-1)}\cdot p^{(j-1)}\cdot  j\cdot T(k,j)
+p'(-1)^{(j-1)}\cdot p^{(j-1)}\cdot j\cdot T(k,j-1)\nonumber\\
=& \texttt{term}_1((A_{k,j})')+\texttt{term}_2((A_{k,j-1})')\nonumber
\end{IEEEeqnarray}

So, we have shown that
\begin{IEEEeqnarray}{rl}
A_{k+1}=&\sum_{j=1}^{k+1} A_{k+1,j}\nonumber\\
=&\texttt{term}_1((p'A_{k,1})')+ \texttt{term}_2((p'A_{k,k})') +\left( \sum_{j=2}^{k} \texttt{term}_1((p'A_{k,j})')+\texttt{term}_2((p'A_{k,j-1})')\right)  \nonumber\\
=& \sum_{j=1}^k \left(  \texttt{term}_1((p'A_{k,j})')+\texttt{term}_2((p'A_{k,j})')\right)\nonumber\\
=& \sum_{j=1}^k (p'A_{k,j})'=(p'A_k)'
\end{IEEEeqnarray}
as desired.
\end{proof}

Let $R_k(\pmb{\beta})$ denote the $k$-th ($k\geq 2$) order polynomial approximation to $R$ using $A_k$, close to the point $\mathbf{x_i}\cdot\pmb{\beta}$.  So, that 

\begin{eqnarray}
R_k(\pmb{\beta}):=\sum_{j=2}^k A_j\frac{E[(\tilde{\mathbf{x_i}}\cdot\pmb{\beta}-\mathbf{x_i}\cdot\pmb{\beta})^j]}{j!}
\label{eq:kth_order_approx}
\end{eqnarray}

Since the set of the Stirling numbers $S(k',j')$ for $k'\leq k$ and $j'\leq j$ can be computed in $O(k\cdot j)$ time with dynamic programming, replacing the expression of $A_k$ from the theorem in Equation (\ref{eq:kth_order_approx}) yields an $O(k\cdot j)$ algorithm for calculating $R_k$.

\section{The appropriateness of the Taylor expansion}\label{sect:radius}
We note that each component $x_{ij}$ of $\tilde{\mathbf{x_i}}\beta$ is in the set $\{0, \frac{x_{ij}\beta_j}{1-\delta}\}$.  In other words, the perturbed value $\tilde{\mathbf{x_i}}\beta$ needs to be within the radius of convergence $r$ of the Taylor expansion, $|\tilde{\mathbf{x_i}}\beta-\mathbf{x_i}\beta|<r$.  However, the value $\mathbf{x_i}\beta$ is completely unconstrained: the only assumption made thus far (following \citet{wager_etal2013}) being that the levels of noise are ``low''.  
For additive Gaussian noise, noise levels can be adjusted so that this assumption is satisfied, so long as there is some $r$ for which this the Taylor series expansion converges.  However, dropout causes arbitrarily many values of $\tilde{\mathbf{x_i}}$ to be zero, thus $\tilde{\mathbf{x_i}}\beta$ can be very close to zero, even though $\mathbf{x_i}\beta$ can be arbitrarily large.

We now show that the radius of convergence is bounded, which excludes any meaningful study of dropout using Taylor series expansion of the expected log partition function.  Recall $R(\pmb{\beta})=\mathbb{E}_{\pmb{\xi}}[A(\tilde{\mathbf{x_i}}\cdot\pmb{\beta})]-A(\mathbf{x_i}\cdot\pmb{\beta})$ and that $p_i=(1+\exp(\mathbf{x_i}\pmb{\beta}))^{-1}$.  Also recall the explicit expression for $A_k$ from Theorem 1.

\begin{theorem} \label{th:conv}
\begin{equation}R_\infty(\pmb{\beta}):=\lim_{k\rightarrow \infty}\sum_{j=2}^k A_j\frac{E[(\tilde{\mathbf{x_i}}\cdot\pmb{\beta}-\mathbf{x_i}\cdot\pmb{\beta})^j]}{j!}\label{eq:taylor}\end{equation}
converges to $R(\pmb{\beta})$ if and only if
\[|\mathbf{\tilde{\mathbf{x_i}}}\pmb{\beta}-\mathbf{x_i}\pmb{\beta}|<2\pi.\]
\end{theorem}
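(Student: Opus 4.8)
Write $z_0 := \mathbf{x_i}\cdot\pmb{\beta}$ and $u := \tilde{\mathbf{x_i}}\cdot\pmb{\beta}-z_0$, so that $R_\infty(\pmb{\beta})$ is the $k\to\infty$ limit of $\sum_{j=2}^k A^{(j)}(z_0)\,\mathbb{E}[u^j]/j!$. My first step is to discharge the expectation. Since each $\xi_{ij}\in\{0,(1-\delta)^{-1}\}$, the perturbed inner product takes only finitely many values, so $u$ has finite support and $\mathbb{E}[A(\tilde{\mathbf{x_i}}\cdot\pmb{\beta})]=\sum_v \Pr[u=v]\,A(z_0+v)$ is a finite sum. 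Because each partial sum is finite, the expectation commutes with it, and the series converges to $R(\pmb{\beta})$ if and only if, for every value $v$ in the support, the scalar Taylor series $\sum_{j\ge 0}A^{(j)}(z_0)v^j/j!$ converges to $A(z_0+v)$ (the $j=0$ terms sum to $A(z_0)$ and the $j=1$ terms vanish because the unbiased dropout scaling gives $\mathbb{E}[u]=0$, so what remains is exactly $R$). The problem therefore reduces to the radius of convergence $r$ of the Taylor expansion of the scalar function $A(z)=\log(1+e^z)$ about the real point $z_0$, with the binding realization being the one of largest $|u|$.

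\textbf{Locating the radius via the nearest singularity.} The cleanest route is Cauchy--Hadamard together with complex-analytic singularity location: the radius of convergence of the Taylor series of $A$ about $z_0$ equals the distance from $z_0$ to the nearest singularity of $A$ in $\mathbb{C}$, and holomorphy of $A$ inside that disk is what guarantees the limit is the value $A(z_0+v)$ rather than merely some finite number. The function $A(z)=\log(1+e^z)$ has (logarithmic branch-point) singularities precisely where $1+e^z=0$, i.e.\ at $z=(2k+1)\pi i$, $k\in\mathbb{Z}$; equivalently one may differentiate once and use the explicit coefficients $A_k$ of Theorem 1, whose exponential generating function $\sum_k A_k z^k/k!$ is a rational function of $e^z$ with poles at the same points. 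Either way the singularities nearest the real axis are $\pm\pi i$ together with their translates, and $r$ is the distance from $z_0$ to the nearest of these. Evaluating this distance is the substantive content of the theorem, and carrying it out is what should produce the convergence threshold stated, so that $R_\infty$ converges to $R$ exactly when the largest $|u|=|\tilde{\mathbf{x_i}}\cdot\pmb{\beta}-\mathbf{x_i}\cdot\pmb{\beta}|$ in the support stays below $r$.

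\textbf{Main obstacle and the sharp direction.} I expect the delicate part to be precisely this distance computation together with sharpness (``only if''): one must confirm that the claimed singularities really are the nearest (that no closer branch point is introduced by the choice of branch of $\log$, and that the translates with $|k|\ge 1$ are genuinely farther), and then establish divergence for $|u|\ge r$. For the latter I would apply the root test to $A^{(j)}(z_0)/j!$, whose growth is governed by the reciprocal of the distance to the nearest singularity, giving $\limsup_j |A^{(j)}(z_0)/j!|^{1/j}=1/r$ and hence divergence of $\sum_j A^{(j)}(z_0)u^j/j!$ whenever $|u|>r$; the borderline $|u|=r$ needs separate care, since there the point $z_0\pm r$ is itself a regular point of $A$ and the nearest singularities lie off the real axis.

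\textbf{A conceptual point worth flagging.} Finally, the heart of the result is not technical but conceptual: $A$ is real-analytic on all of $\mathbb{R}$, so $A(z_0+u)$ is perfectly well defined for every real $u$, yet the divergence for $|u|\ge r$ stems entirely from the complex singularities \emph{off} the real axis. This is exactly why one cannot rescue the second-order truncation by appealing to smoothness along the real line, and it is what underlies the paper's central claim that the dropout Taylor expansion is divergent in general.
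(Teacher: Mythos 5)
Your reduction to a scalar problem is sound: discharging the expectation via the finite support of $u:=\tilde{\mathbf{x_i}}\cdot\pmb{\beta}-\mathbf{x_i}\cdot\pmb{\beta}$, noting $\mathbb{E}[u]=0$, and reducing convergence of $R_\infty$ to convergence of the Taylor series of $A$ about the real point $z_0=\mathbf{x_i}\cdot\pmb{\beta}$ at each point of the support is cleaner than anything in the paper. Your route to the radius is also genuinely different from the paper's: the paper applies the root test directly to the explicit coefficients of Theorem 1, substitutes $p=1$, and then invokes an asymptotic it attributes to the Bell numbers but which is in fact the classical formula for the Bernoulli numbers $|B_{2k}|=2(2k)!\,\zeta(2k)/(2\pi)^{2k}$; that is where its $2\pi$ comes from. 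Your proposal instead locates the complex singularities of $A(z)=\log(1+e^z)$ and identifies the radius with the distance from $z_0$ to the nearest one, which is the standard and more reliable argument.

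The genuine gap is that you never perform the one computation on which the stated constant depends, and you assert without checking that it ``should produce the convergence threshold stated.'' It does not. You correctly place the branch points at $z=(2k+1)\pi i$, so the nearest singularities to the real point $z_0$ are $\pm\pi i$ (the translates with $|k|\ge 1$ are farther, at distance $\sqrt{z_0^2+(2k+1)^2\pi^2}$), and the radius of convergence is therefore $r(z_0)=\sqrt{z_0^2+\pi^2}$. This is never the constant $2\pi$ of the theorem: it equals $\pi$ at $z_0=0$ (where the coefficient growth is governed by Genocchi/tangent-type numbers with rate $\pi^{-k}$, not $(2\pi)^{-k}$), it depends on $z_0$, and it equals $2\pi$ only at the isolated values $z_0=\pm\sqrt{3}\,\pi$. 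So the single step you deferred is exactly the step at which the argument fails to deliver the statement as written; carried out honestly, your method proves a different (and $z_0$-dependent) radius and thereby shows the theorem's threshold cannot be the uniform constant $2\pi$. The qualitative conclusion you flag at the end --- that real-analyticity of $A$ does not save the expansion because the obstruction is an off-axis complex singularity at bounded distance, while $|u|$ is unbounded under dropout --- survives with $r(z_0)$ in place of $2\pi$, but a proof of this theorem must either produce $2\pi$ or correct the statement, and your proposal does neither. The boundary case $|u|=r$, which you rightly note the root test cannot decide, is a second, smaller hole left open.
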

That is, Theorem \ref{th:conv} states that the radius of convergence of $R_\infty(\pmb{\beta})$ is $2\pi$ for examples in the dataset that the un-noised model will classify as true.  Since $\mathbf{x_i}\beta$ is unbounded, and certainly larger than $2\pi$ arbitrarily much of the time during training, we have a proof of Theorem \ref{th:conv}.

\begin{proof}
We want to test for convergence of the ratio of factors, and if convergent, calculate the convergence ratio.  For this we use the root test.  Using the $\limsup$ takes into account $p\rightarrow 1$.
\begin{IEEEeqnarray}{l}
\limsup_{k\rightarrow \infty} \sqrt[k]{\left|\frac{A_k}{k!}\right|}=\limsup_{k\rightarrow \infty} \sqrt[2k]{\left|\frac{A_{2k}}{(2k)!}\right|}
\nonumber\\=\limsup_{k\rightarrow\infty} \sqrt[2k]{\left|\frac{1}{(2k)!}\sum_{j=1}^{2k}(-p)^j\cdot T(2k,j)\right|}\nonumber\\
=\limsup_{k\rightarrow\infty} \sqrt[2k]{\left|\frac{1}{(2k)!}\sum_{j=1}^{2k}(-p)^j\cdot j\cdot S(2k,j)\right|}\nonumber\\
=\limsup_{k\rightarrow\infty} \sqrt[2k]{\left|\frac{1}{(2k)!}\sum_{j=1}^{2k}(-1)^j\cdot j\cdot S(2k,j)\right|}\nonumber\\
=\limsup_{k\rightarrow\infty} \sqrt[2k]{\left|\frac{1}{(2k)!}B_{2k}\right|}
\nonumber\\=\limsup_{k\rightarrow\infty} \sqrt[2k]{\left|\frac{1}{(2k)!}\frac{2\cdot (2k)!}{(2\pi)^{2k}}\zeta(2k)\right|}
\\=\frac{1}{2\pi}
\label{eq:ratio_conv}
\end{IEEEeqnarray}
where is $B_k$ is the $k$th Bell number. We used the well-established fact that
\[B_{2k}\thicksim \frac{2\cdot (2k)!}{(2\pi)^{2k}}\zeta (2k) \]
asymptotically as $k$ grows, where $\zeta(2k)$ is the Riemann zeta function, which tends to 1 as $k\rightarrow \infty$.
This concludes the proof.
\end{proof}

However, if we can bound $x_{ij}\beta_j$, for example to be less than $\frac{2\pi}{d}$, where $d$ is the dimension of the features vector $\mathbf{x_{i}}$, the approximation in Equation \ref{eq:taylor} holds, as does the original result that dropout noise approximates $L_2$ regularisation for logistic regression.  However, in order to do this, we must 
\begin{enumerate}
    \item keep the features $x_{ij}$ small, which could be carried out by a number of different normalisation or input vector scaling approaches, but we must also
    \item keep the weights $\beta_j$ low, which is the effect of $L_2$ regularisation.  However, $L_2$ regularisation is what dropout was supposed to approximate in the first place.
\end{enumerate}
There are a number of engineering tricks which have an effect of reducing the potential final magnitude of the weight vector. For instance, gradient clipping \citep{pascanu_etal2013,mikolov2012} ensures that the norm of the gradient does not surpass a certain predetermined threshold.  One could be tempted to apply a technique like reparameterisation weight normalisation, which expresses the weight vector in terms of two factors: a unit vector $v$ and a magnitude $g$ \citep{Salimans_Kingma2016}.  Then attempt to hold the magnitude term $g$ constant under the desired restricted magnitude.  However, this is unlikely to work, as we explain now.

In fact, there are no techniques that can both ensure convergence of the learning method, adequate performance of the model, and that restricts the magnitude of the weight vector, within this dropout framework.  It is straightforward to see why this is:  the probability $p(y=1|x)=\frac{1}{(1+e^{-x\beta})}$ should tend towards 1 on positive examples.  If it doesn't, and there is potential for improvement, the model will continue try to improve (i.e., will not yet converge).  However, this means that $-x\beta$ should approach the value 1, which only happens if $x$ is reciprocally as large in magnitude as $\beta$.  So the closer to 0 we make $x$, the larger the magnitude of $\beta$.  Also, $p(y=1|x)$ should tend toward 0 on negative examples, which can only happen if $-x\beta$ tends toward infinity, which requires $\beta$ to grow arbitrarily large in magnitude.

\section{Concluding remarks}
We have shown that recent studies on dropout expressed as a regulariser on the original loss function are ill-founded.  This re-opens a possible line of research to formally establish whether there is a concrete connection between regularisation and the practice of dropout training.  Future work could start in the reverse direction:  for which variations of dropout could such an approximation technique, that we have shown to be inapplicable in general, be valid?

\bibliography{noise}

\begin{thebibliography}{9}
\expandafter\ifx\csname natexlab\endcsname\relax\def\natexlab#1{#1}\fi

\bibitem[{Bishop(1995)}]{bishop95}
Christopher~M. Bishop. 1995.
\newblock Training with noise is equivalent to tikhonov regularization.
\newblock \emph{Neural Computation}, 7:108--116.

\bibitem[{Helmbold and Long(2015)}]{helmbold2015}
David~P. Helmbold and Philip~M. Long. 2015.
\newblock On the inductive bias of dropout.
\newblock \emph{Journal of Machine Learning Research}, 16:3403--3454.

\bibitem[{Helmbold and Long(2016)}]{helmbold2016}
David~P. Helmbold and Philip~M. Long. 2016.
\newblock Surprising properties of dropout in deep networks.
\newblock ArXiv:1602.04484v4.

\bibitem[{Mikolov(2012)}]{mikolov2012}
T~Mikolov. 2012.
\newblock \emph{Statistical Language Models based on Neural Networks}.
\newblock Ph.D. thesis, Brno University of Technology.

\bibitem[{Pascanu et~al.()Pascanu, Mikolov, and Bengio}]{pascanu_etal2013}
Razvan Pascanu, Tomas Mikolov, and Yoshua Bengio.
\newblock On the difficulty of training recurrent neural networks.
\newblock In \emph{Proceedings of the International Conference on Machine
  Learning}.

\bibitem[{Salimans and Kingma(2016)}]{Salimans_Kingma2016}
Tim Salimans and Durk~P Kingma. 2016.
\newblock Weight normalization: A simple reparameterization to accelerate
  training of deep neural networks.
\newblock In D.~D. Lee, M.~Sugiyama, U.~V. Luxburg, I.~Guyon, and R.~Garnett,
  editors, \emph{Advances in Neural Information Processing Systems 29}, pages
  901--909. Curran Associates, Inc.

\bibitem[{Srivastava et~al.(2014)Srivastava, Hinton, Krizhevsky, Sutskever, and
  Salakhutdinov}]{srivastava_etal2014}
Nitish Srivastava, Geoffrey Hinton, Alex Krizhevsky, Ilya Sutskever, and Ruslan
  Salakhutdinov. 2014.
\newblock Dropout: A simple way to prevent neural networks from overfitting.
\newblock \emph{Journal of Machine Learning Research}, 15:1929--1958.

\bibitem[{Wager et~al.(2013)Wager, Want, and Liang}]{wager_etal2013}
Stefan Wager, Sida Want, and Percy Liang. 2013.
\newblock Dropout training as adaptive regularization.
\newblock In \emph{Proc of NIPS}.

\bibitem[{Webb(1993)}]{webb1993}
Andrew~R. Webb. 1993.
\newblock Functional approximation by feed-forward networks: a least-squares
  approach to generalization.
\newblock \emph{IEEE Transactions on Neural Networks}.

\end{thebibliography}
\bibliographystyle{acl_natbib}
\end{document}